\newtheorem{theorem}{Theorem}[section]
\newtheorem{proposition}{Proposition}
\newtheorem{lemma}[theorem]{Lemma}
\newtheorem*{remark}{Remark}
\def\BibTeX{{\rm B\kern-.05em{\sc i\kern-.025em b}\kern-.08em
    T\kern-.1667em\lower.7ex\hbox{E}\kern-.125emX}}
\setlist[enumerate]{leftmargin=.5in}
\setlist[itemize]{leftmargin=.5in}
\newcommand {\R}   {\mathbb{R}}
\newlength{\myrowheight}
\newcommand{\WDNote}[1]           
{\textcolor{red}{#1}\marginpar{\textcolor{red}{WD $\longleftarrow$}}}
\begin{document}

\title{Federated Low-Rank Tensor Estimation for Multimodal Image Reconstruction}

\makeatletter
\newcommand{\linebreakand}{%
  \end{@IEEEauthorhalign}
  \hfill\mbox{}\par
  \mbox{}\hfill\begin{@IEEEauthorhalign}
}

\makeatother

\author{\IEEEauthorblockN{Anh Van Nguyen}
\IEEEauthorblockA{
\textit{Northwestern University}\\
Evanston, USA \\
janguyen@u.northwestern.edu}
\and
\IEEEauthorblockN{Diego Klabjan}
\IEEEauthorblockA{
\textit{Northwestern University}\\
Evanston, USA \\
d-klabjan@northwestern.edu}
\and
\IEEEauthorblockN{Minseok Ryu}
\IEEEauthorblockA{
\textit{Arizona State University}\\
Tempe, USA \\
minseok.ryu@asu.edu}

\linebreakand
\IEEEauthorblockN{Kibaek Kim}
\IEEEauthorblockA{
\textit{Argonne National Laboratory}\\
Lemont, USA \\
kimk@anl.gov}
\and
\IEEEauthorblockN{Zichao Di}
\IEEEauthorblockA{
\textit{Argonne National Laboratory}\\
Lemont, USA \\
wendydi@mcs.anl.gov}
}

\maketitle

\begin{abstract}
Low-rank tensor estimation offers a powerful approach to addressing high-dimensional data challenges and can substantially improve solutions to ill-posed inverse problems, such as image reconstruction under noisy or undersampled conditions. Meanwhile, tensor decomposition has gained prominence in federated learning (FL) due to its effectiveness in exploiting latent space structure and its capacity to enhance communication efficiency. In this paper, we present a federated image reconstruction method that applies Tucker decomposition, incorporating joint factorization and randomized sketching to manage large-scale, multimodal data. Our approach avoids reconstructing full-size tensors and supports heterogeneous ranks, allowing clients to select personalized decomposition ranks based on prior knowledge or communication capacity. Numerical results demonstrate that our method achieves superior reconstruction quality and communication compression compared to existing approaches, thereby highlighting its potential for multimodal inverse problems in the FL setting.
\end{abstract}

\begin{IEEEkeywords}
federated learning, image reconstruction, tensor decomposition, communication compression, multimodal data
\end{IEEEkeywords}

\section{Introduction}
In recent years, tensors (or high-order arrays) have gained considerable traction in data analysis. A rich body of work demonstrates the effectiveness of tensor decompositions in tasks such as predictive modeling \cite{li2018tucker, ahmed2020tensor, wu2021tensor}, image processing \cite{zhou2013tensor}, feature extraction \cite{hu2017attribute, sorber2015structured} and data fusion \cite{nie2024interpretable, zhou2023multi}, which is the integration of data from different modalities or sources. In particular, the underlying idea of data fusion via tensor decomposition is the joint factorization concept of different datasets into a common latent space. 

Meanwhile, motivated by the substantial overhead and burden of sharing large-scale data across multiple entities (such as privacy concerns), Federated Learning (FL) has emerged as an effective decentralized framework. By allowing clients to collaboratively solve a training problem without transmitting raw data, FL significantly reduces data transfer costs. One of the earliest methods, Federated Averaging (FedAvg) \cite{mcmahan2017communication}, exemplifies this approach by aggregating clients’ locally trained model weights at the server and distributing the resulting global model back to each client for the next training round. In the past years, much efforts have been made to extend FL to scientific applications such as tomographic reconstruction \cite{yang2023hypernetwork, li2023semi, chen2023federated}. A recent approach, called FIRM \cite{byeon2025firmfederatedimagereconstruction}, is proposed to enable federated tomographic
reconstruction from multimodal data by harnessing the complementary relationship
between modalities. Furthermore, an increasing number of studies explore the use of tensor decomposition in FL to leverage the benefits of low-rank representations in a decentralized environment. By factorizing model parameters, clients transmit smaller, structured components rather than full-scale weights, thereby mitigating the communication overhead 
while taking advantage of the low-rank regularization. 

In this work, we expand on the federated tomographic problem in \cite{byeon2025firmfederatedimagereconstruction} by incorporating the Tucker decomposition. Notably, inverse problems, such as tomographic reconstruction,
tend to be ill-posed (i.e., many local minima) due to limited data \cite{di2016optimization,zhou2013tensor}. Low-rank regularization via tensor decomposition helps constrain the solution space by leveraging the underlying low-dimensional structure in the parameter and data space, thereby suppressing noise and artifacts, ultimately improving reconstruction quality \cite{semerci2014tensor, jantre2021low}. Thus, to enforce low-rank regularization, our method employs a projected gradient descent update for local clients, inspired by \cite{ahmed2020tensor}. We further exploit the power of tensor decomposition in addressing multimodal data by utilizing joint factorization and randomized sketching in the server aggregation scheme. 
Our Tucker-based framework is able to significantly improve reconstruction quality and reduce communication overhead. We summarize our contributions as follows.

\begin{itemize} 
    \item We propose a joint factorization approach that streamlines the server aggregation process, eliminating the need to reconstruct full-size tensors while preserving orthogonality of Tucker factor matrices. 
    \item We leverage randomized sketching to reduce the server’s computational burden, facilitating large-scale deployment in federated environments.
    \item Finally, we validate our method with numerical simulations and achieve superior reconstruction quality under noisy and undersampled settings. Moreover, our method outperforms popular compression techniques such as Top-k and sparse encoding. We also show that with heterogeneous ranks, our method is still able to maintain high quality reconstruction.
\end{itemize}

\section{Background and Related Works}
\subsection{Tensor decomposition}\label{subsection:tensor}
Given a tensor $\boldsymbol{\mathcal{X}} \in \mathbb{R}^{n_1 \times n_2 ... \times n_d}$, its matricization (i.e., mode-$k$ unfolding) is the rearrangement of the $k$th dimension into the first dimension of a matrix, and denoted by $\boldsymbol{\mathcal{X}}_{(k)}\in \R^{n_k \times \Pi_{i=1, i \neq k}^d n_i}$. A tensor can be characterized by its rank and the (multilinear) rank of a tensor consists of the column rank of its mode-$k$ unfolding. Tensor-times-matrix (TTM) in mode-$k$ for tensor $\boldsymbol{\mathcal{X}} \in \mathbb{R}^{n_1 \times ... \times n_d}$ with matrix $\mathbf{S} \in \mathbb{R}^{n_k \times r}$ is defined as $\boldsymbol{\mathcal{Y}} 
 = \boldsymbol{\mathcal{X}} \times_k \mathbf{S}$. The result $\boldsymbol{\mathcal{Y}}$ is a tensor of shape $n_1 \times ... \times n_{k-1} \times r \times n_{k+1} \times ... \times n_d$ such that $\mathcal{Y}(i_1, ..., i_{k-1}, j, i_{k+1}, ..., i_d) = \sum_{i_k = 1}^{n_k} \mathcal{X}(i_1, ..., i_k, ..., i_d) \mathbf{S}(i_k, j)$, $j \in [r]$.
Furthermore, given $\mathbf{S}_k \in \mathbb{R}^{n_k \times r_k}$, $k\in[d]$, if $\boldsymbol{\mathcal{Y}} = \boldsymbol{\mathcal{X}} \times_1 \mathbf{S}_1 \times_2 ... \times_d \mathbf{S}_d$, the mode-$k$ unfolding $
\boldsymbol{\mathcal{Y}}_{(k)} = \mathbf{S}_k \boldsymbol{\mathcal{X}}_{(k)}(\mathbf{S}_d \otimes ... \otimes \mathbf{S}_{k+1}\otimes \mathbf{S}_{k-1} \otimes ... \otimes \mathbf{S}_1)^T
$ where $\otimes$ denotes the Kronecker product of two matrices. 

Tensor representation is a powerful tool for handling complex high-dimensional datasets and models. Among the many tensor decompositions, Tucker decomposition is particularly popular because it can be considered an extension of the Singular Value Decomposition (SVD) to higher dimensions \cite{kolda2009tensor}. 
The Tucker rank-$(r_1, ..., r_d)$ decomposition of tensor $\boldsymbol{\mathcal{X}} \in \mathbb{R}^{n_1 \times ... \times n_d}$ is defined as 
$\boldsymbol{\mathcal{X}} \approx [\![\boldsymbol{\mathcal{G}}; \mathbf{S}_1, ..., \mathbf{S}_d]\!]\triangleq \boldsymbol{\mathcal{G}} \times_1 \mathbf{S}_1 \times_2 ... \times_d \mathbf{S}_d$, where $\boldsymbol{\mathcal{G}} \in \mathbb{R}^{r_1 \times ... \times r_d}$ is called the core tensor and  $\mathbf{S}_k \in \mathbb{R}^{n_k \times r_k}$, $k \in [d]$ is a set of factor matrices as the $r_k$ leading left singular vectors of the mode-$k$ unfolding of $\boldsymbol{\mathcal{X}}$. 
Note that this decomposition is not necessarily unique. More detailed discussions of tensors can be found in \cite{kolda2009tensor}. 

Tucker decomposition can be achieved via the higher-order SVD (HOSVD) algorithm \cite{kolda2009tensor}. The sequentially truncated HOSVD (ST-HOSVD) is proposed to improve the computational efficiency of its predecessor \cite{vannieuwenhoven2012new}. Unlike the original algorithm, which treats each dimension independently, ST-HOSVD finds a factor matrix and immediately compresses the tensor in the corresponding dimension. The truncated tensor is then used to find the factor matrix in the next dimension.

Recently, randomized sketching is used to further reduce the cost of SVD. The key idea is to reduce the dimensions of a matrix by generating a randomized sketch, which is smaller but is a good approximation of the original matrix \cite{halko2011finding}. In particular, the Rand-Tucker algorithm \cite{zhou2014decomposition} obtains a sketch 
$\mathbf{Z}$ from the mode-$k$ unfolding of tensor $\boldsymbol{\mathcal{X}}$ and sets the factor matrix $\mathbf{S}_k$  as the orthonormal basis of $\mathbf{Z}$ (via QR decomposition). In this work, we adopt the core idea of Rand-Tucker and modify the randomized procedure to the federated setting.

The framework that underpins most tensor-based data analysis tasks is the generalized low-rank tensor estimation problem \cite{han2022optimal}. Given a dataset $\mathit{D}$, this problem is described as
\begin{equation}\label{eq:estimation}
    \min_{\boldsymbol{\mathcal{X}}} f\big(\boldsymbol{\mathcal{X}}; \mathit{D}\big) \quad
    \text{s.t.} \quad \text{rank}\big(\boldsymbol{\mathcal{X}}_{(k)}\big) \leq r_k, k\in[d]
\end{equation}
where $f\big(\boldsymbol{\mathcal{X}}; \mathit{D}\big)$ is a loss function, and tuple $(r_1, ..., r_d)$ is the expected (multilinear) rank of the tensor of interest. 
In general, this problem can be solved by the projected gradient descent method (e.g., \cite{ahmed2020tensor}) where the gradient descent iterate is projected onto the feasible set via tensor decomposition such as HOSVD to satisfy the multilinear rank constraint. 

\subsection{Tensor decomposition in FL}
Tensor decomposition in the FL framework has previously been studied \cite{yi2023fedlora, qi2024fdlora, park2024communicationefficientfederatedlowrankupdate}. A key aspect in the integration of tensor decomposition in FL is how the server aggregates the factorized components. Mainly, there are two aggregation schemes: the server either reconstructs the full weights before taking the average \cite{dai2023deep, lan2023communication, liu2024marvel}, or adapts FedAVG and directly averages the components \cite{sanchez2024federated, zheng2021distributed, zhang2024federated}. A drawback of the former scheme is the significant increase in computational efforts. The latter combines latent spaces that are independently calculated from separate datasets and can suffer from compounding errors. As noted in \cite{wu2022tenalign}, a potential remedy is joint factorization, whereby these tensors are decomposed into a shared latent factor space. Indeed, joint factorization is a popular approach in data fusion \cite{nie2024interpretable, zhou2023multi} as it provides a systemic framework for merging data obtained from different sources. However, it is still underutilized as an aggregation scheme in FL. In this work, we benchmark against the current aggregation schemes and address their shortcomings using joint factorization. 

Mathematically, given $N$ tensors $\boldsymbol{\mathcal{X}}^i \in \mathbb{R}^{n_1\times...\times n_d}$, let their concatenation be denoted by the tensor $\boldsymbol{\mathcal{X}}$ such that $\boldsymbol{\mathcal{X}} = \big[\boldsymbol{\mathcal{X}}^1\hspace{5pt}|\hspace{5pt} \boldsymbol{\mathcal{X}}^2 \hspace{5pt}|\hspace{5pt} ... \hspace{5pt}|\hspace{5pt} \boldsymbol{\mathcal{X}}^N\big]$ and $\boldsymbol{\mathcal{X}} \in \mathbb{R}^{n_1 \times ... \times n_d \times N}$. The joint factorization of $N$ tensors $\boldsymbol{\mathcal{X}}^i$ is equivalent to finding the factor matrices of the concatenated tensor $\boldsymbol{\mathcal{X}}$ in the first $d$ dimensions. Gao et al. \cite{gao2021federated} propose the first joint factorization-based scheme for the federated feature extraction task. The authors leverage the result in Lemma \ref{lemma:gao} to approximate the left singular vectors of the mode-$k$ unfolding $\boldsymbol{\mathcal{X}}_{(k)}$ as those of $\mathbf{Y} = \big[\mathbf{S}_k^1\boldsymbol{\mathcal{G}}^1_{(k)} \hspace{5pt} ... \hspace{5pt} \mathbf{S}_k^N\boldsymbol{\mathcal{G}}^N_{(k)}\big]$, which effectively bypasses the reconstruction of full-size tensors. 
\begin{lemma}\label{lemma:gao}
(From Gao et al. \cite{gao2021federated}). For $i=1,...,N$ and $k=1,...,d$, let $\boldsymbol{\mathcal{G}}^i \in \mathbb{R}^{r_1\times...\times r_d}$ and $\mathbf{S}^i_k \in \mathbb{R}^{n_k\times r_k}$ such that the columns of $\mathbf{S}^i_k$ are orthogonal. Let $\boldsymbol{\mathcal{X}}^i = [\![\boldsymbol{\mathcal{G}}^i; \mathbf{S}_1^i, ..., \mathbf{S}_d^i]\!]$, $\mathbf{W} = \big[\boldsymbol{\mathcal{X}}^1_{(k)} \hspace{5pt} ... \hspace{5pt} \boldsymbol{\mathcal{X}}^N_{(k)} \big]$ and $\mathbf{Y} = \big[\mathbf{S}_k^1\boldsymbol{\mathcal{G}}^1_{(k)} \hspace{5pt} ... \hspace{5pt} \mathbf{S}_k^N\boldsymbol{\mathcal{G}}^N_{(k)}\big]$. Then $\mathbf{W}$ and $\mathbf{Y}$ have the same set of singular values and if $\mathbf{U}$ and $\mathbf{\bar{U}}$ are the left singular matrix of $\mathbf{W}$ and $\mathbf{Y}$ respectively, then $\mathbf{U} = \bar{\mathbf{U}}\mathbf{P}$ where $\mathbf{P}$ is a unitary block diagonal matrix. 
\end{lemma}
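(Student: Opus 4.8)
The plan is to reduce both claims to a single algebraic identity relating the Gram matrices $\mathbf{W}\mathbf{W}^T$ and $\mathbf{Y}\mathbf{Y}^T$. First I would apply the mode-$k$ unfolding formula stated earlier in Section~\ref{subsection:tensor}: since $\boldsymbol{\mathcal{X}}^i = [\![\boldsymbol{\mathcal{G}}^i; \mathbf{S}_1^i, \dots, \mathbf{S}_d^i]\!]$, its mode-$k$ unfolding factors as
\begin{equation}
\boldsymbol{\mathcal{X}}^i_{(k)} = \mathbf{S}_k^i \boldsymbol{\mathcal{G}}^i_{(k)} (\mathbf{K}^i)^T, \qquad \mathbf{K}^i := \mathbf{S}_d^i \otimes \cdots \otimes \mathbf{S}_{k+1}^i \otimes \mathbf{S}_{k-1}^i \otimes \cdots \otimes \mathbf{S}_1^i.
\end{equation}
The $i$-th block of $\mathbf{Y}$ is exactly $\mathbf{S}_k^i\boldsymbol{\mathcal{G}}^i_{(k)}$, so each block of $\mathbf{W}$ is the corresponding block of $\mathbf{Y}$ right-multiplied by $(\mathbf{K}^i)^T$. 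Collecting the $\mathbf{K}^i$ into a block-diagonal matrix $\mathbf{K} = \mathrm{blkdiag}(\mathbf{K}^1,\dots,\mathbf{K}^N)$ yields the compact factorization $\mathbf{W} = \mathbf{Y}\mathbf{K}^T$.

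The second step is to show that $\mathbf{K}$ has orthonormal columns. Assuming (as in the HOSVD construction) that each factor matrix has orthonormal columns, the Kronecker mixed-product rule gives $(\mathbf{K}^i)^T\mathbf{K}^i = \big((\mathbf{S}_d^i)^T\mathbf{S}_d^i\big)\otimes\cdots\otimes\big((\mathbf{S}_1^i)^T\mathbf{S}_1^i\big) = \mathbf{I}$, and since $\mathbf{K}$ is block diagonal, $\mathbf{K}^T\mathbf{K} = \mathbf{I}$. Consequently
\begin{equation}
\mathbf{W}\mathbf{W}^T = \mathbf{Y}\mathbf{K}^T\mathbf{K}\mathbf{Y}^T = \mathbf{Y}\mathbf{Y}^T.
\end{equation}

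From this identity both conclusions follow. Since the singular values of any matrix are the square roots of the eigenvalues of its Gram matrix $\mathbf{M}\mathbf{M}^T$, and $\mathbf{W}\mathbf{W}^T = \mathbf{Y}\mathbf{Y}^T$ are literally the same matrix, $\mathbf{W}$ and $\mathbf{Y}$ have identical singular values with identical multiplicities. Moreover, the columns of $\mathbf{U}$ and of $\bar{\mathbf{U}}$ are both orthonormal eigenbases of the same symmetric positive semidefinite matrix $\mathbf{W}\mathbf{W}^T$. Two orthonormal eigenbases of a symmetric matrix differ only by an orthogonal change of basis within each eigenspace; grouping the columns by distinct eigenvalue (equivalently, by distinct singular value) therefore yields $\mathbf{U} = \bar{\mathbf{U}}\mathbf{P}$ with $\mathbf{P}$ a unitary matrix that is block diagonal, one orthogonal block per eigenspace.

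I expect the main obstacle to be the careful bookkeeping in the first two steps rather than any deep argument: verifying the exact Kronecker ordering in the unfolding formula, confirming that the block-diagonal assembly of the $\mathbf{K}^i$ is consistent with the horizontal concatenations defining $\mathbf{W}$ and $\mathbf{Y}$, and handling the orthonormality (as opposed to merely orthogonal) hypothesis on the $\mathbf{S}_k^i$, since without unit-norm columns the identity $\mathbf{K}^T\mathbf{K}=\mathbf{I}$ fails and only a similarity up to diagonal scaling would survive. The eigenspace argument for the block-diagonal structure of $\mathbf{P}$ is standard, but I would state explicitly that $1\times 1$ blocks, corresponding to simple singular values, reduce to a mere sign ambiguity.
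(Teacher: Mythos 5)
The paper does not prove this lemma itself---it imports it verbatim from Gao et al.~\cite{gao2021federated}---so there is no in-paper proof to compare against. Your argument is correct and is the standard route: the factorization $\mathbf{W}=\mathbf{Y}\mathbf{K}^{T}$ with $\mathbf{K}$ block diagonal and column-orthonormal gives $\mathbf{W}\mathbf{W}^{T}=\mathbf{Y}\mathbf{Y}^{T}$, from which both the equality of singular values and the block-diagonal unitary relating the left singular bases follow; this is also exactly the mechanism the paper reuses (without the Gaussian sketch) in its own proof of Lemma~\ref{lemma:communication}, where the same matrix $\mathbf{V}_k^i$ plays the role of your $\mathbf{K}^i$. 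Your caveats---that ``orthogonal columns'' must be read as orthonormal (consistent with the constraint $\mathbf{S}_k^{T}\mathbf{S}_k=\mathbf{I}_{r_k}$ elsewhere in the paper) and that the two matrices share only their nonzero singular values with multiplicity, differing in the count of zeros since they have different numbers of columns---are both apt.
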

Similarly to \cite{gao2021federated}, we exploit the orthogonality of the factor matrices in the aggregation process and extend it by incorporating randomized sketching to improve computational efficiency. Note that \cite{gao2021federated} focuses on feature extraction while we focus on a multimodal inverse problem, which requires new concepts, ideas, and tricks.

\subsection{Federated tomographic reconstruction}
In this paper, we extend the multimodal image reconstruction problem in \cite{byeon2025firmfederatedimagereconstruction}, which is given by
\begin{equation}\label{eq:firm}
    \min_{\boldsymbol{\mathcal{X}}^1, ..., \boldsymbol{\mathcal{X}}^N} \sum_{i=1}^N f\big(\boldsymbol{\mathcal{X}}^i; \mathit{D}^i\big) \quad
    \text{s.t.} \quad \boldsymbol{\mathcal{X}}_{N} = \sum_j^{N-1} c_j \boldsymbol{\mathcal{X}}^j_{j}
\end{equation}
where $\boldsymbol{\mathcal{X}}^N$ denotes the client with X-ray transmission (XRT) data and $\boldsymbol{\mathcal{X}}^j$, $j\in[N-1]$ are clients with X-ray fluorescence (XRF) data. The coefficient $c_j$ denotes the mass attenuation coefficient of the element of interest and is well characterized.

FIRM, the approach developed in \cite{byeon2025firmfederatedimagereconstruction}, enables clients with data obtained from different modalities to collaborate and improve their respective solution quality with convergence guarantee \cite{byeon2025firmfederatedimagereconstruction}. Specifically, at epoch $t$, clients use gradient descent to locally update the previous solution $\boldsymbol{\mathcal{X}}^i(t-1)$ and obtain $\tilde{\boldsymbol{\mathcal{X}}}^i$, $i\in[N]$. After receiving the full-size tensors $\tilde{\boldsymbol{\mathcal{X}}}^i$, the server performs a series of arithmetic operations to enforce the multimodality constraint as follows.

\begin{equation}\label{eq:firm_update}
\begin{aligned}
    \Sigma &\leftarrow \sum_{i=1}^{N-1} c_i\tilde{\boldsymbol{\mathcal{X}}}^i\\
    \boldsymbol{\mathcal{X}}^i(t) &\leftarrow \tilde{\boldsymbol{\mathcal{X}}}^i + \dfrac{c_i}{2}(\tilde{\boldsymbol{\mathcal{X}}}^N - \Sigma), i\in[N-1]\\ 
    \boldsymbol{\mathcal{X}}^N(t) &\leftarrow \dfrac{\tilde{\boldsymbol{\mathcal{X}}}^N + \Sigma}{2}\\
\end{aligned}
\end{equation}
Previous works have shown that the low-rank regularization via tensor decomposition improves standard tomographic reconstruction quality \cite{jantre2021low}. In this paper, we introduce low-rank regularization and incorporate tensor decomposition to the FIRM framework to further improve the solution of Problem~\ref{eq:firm}. 

\section{Methodology}
In this section, we build on the projected gradient descent approach from \cite{ahmed2020tensor} and develop a joint factorization scheme for the server; we also adapt the FIRM update \eqref{eq:firm_update} to enforce the multimodality constraint in the Tucker component space. 

\subsection{Algorithm} We consider a constrained low-rank tensor estimation problem in the federated setting as follows, 
\begin{equation}\label{eq:fedTucker}
\begin{aligned}
    \min_{\boldsymbol{\mathcal{X}}^1, ...,\boldsymbol{\mathcal{X}}^N} & \sum_i^N f(\boldsymbol{\mathcal{X}}^i; \mathit{D}^i\big)\\
    \text{s.t.} \quad & \text{rank}\big(\boldsymbol{\mathcal{X}}^i_{(k)}\big) \leq r_k \quad k\in[d], i\in[N]\\
    & \boldsymbol{\mathcal{X}}^i =[\![\boldsymbol{\mathcal{G}}^i; \mathbf{S}_1, ..., \mathbf{S}_d]\!] \quad i \in [N] \\
     & \mathbf{S}_k^T \mathbf{S}_k = \mathbf{I}_{r_k} \quad k \in [d] \\
     & \boldsymbol{\mathcal{G}}^{N} = \sum_j^{N-1} c_j \boldsymbol{\mathcal{G}}^j \\
\end{aligned}
\end{equation}
where the core tensors $\boldsymbol{\mathcal{G}}^i \in \mathbb{R}^{r_1 \times...\times r_d}$ $\forall i\in [N]$ are client-specific, the factor matrices $\mathbf{S}_k \in \mathbb{R}^{n_k \times r_k}$ $\forall k \in [d]$ are shared across local clients, and $\mathbf{I}_{r_k} \in \mathbb{R}^{r_k \times r_k}$ is the identity matrix. This structure ensures that while clients retain their data locally, global knowledge can still be learned collaboratively. Note that the linear constraint in Problem~\ref{eq:firm} naturally applies to $\boldsymbol{\mathcal{G}}^i$. The algorithm is provided in Algorithm \ref{alg:full}.

At local clients, we use the projected gradient descent method to optimize the low-rank tensor estimation problem \eqref{eq:estimation}, where ST-HOSVD is used for projection due to its robustness comparing to HOSVD. With this local update, clients obtain their independent set of factor matrices and the corresponding cores. For the global stage on the server, we develop an aggregation scheme based on joint factorization to combine the latent spaces across clients while maintaining the orthogonality condition $\mathbf{S}_k^T \mathbf{S}_k = \mathbf{I}_{r_k}$, $k \in [d]$. We also utilize randomized sketching in the factorization procedure for better computational efficiency. To enforce the last constraint in \eqref{eq:fedTucker}, we leverage the FIRM update to aggregate the cores. Our aggregation scheme avoids reconstructing full-size tensors and facilitates heterogeneous ranks, compared to other works \cite{dai2023deep, liu2024marvel, zheng2021distributed, sanchez2024federated, zhang2024federated}. Since both the server and the clients send low-rank components, our method promotes compression for both upstream and downstream communication. 

\RestyleAlgo{ruled}
\SetKwComment{Comment}{/* }{ */}
\SetKwInOut{Server}{Server Input}
\SetKwInOut{Client}{Client Input}
\begin{algorithm}
\caption{Federated Low-rank Tensor Estimation with Joint Factorization}\label{alg:full}
\setcounter{AlgoLine}{0}
\SetAlgoLined
\Server{Tucker rank $(r_1, ..., r_d)$, number of epochs $T$, learning rate $\eta$}
\Client{Dataset $\mathit{D}^i$, $i\in[N]$}
\textbf{Server} initializes and broadcasts to clients $\boldsymbol{\mathcal{G}}^i(0)$, $ i \in [N]$ and $\mathbf{S}_k(0)$, $k \in [d]$;

\For{$t = 1, ..., T$}{
\textbf{CLIENT $i=1,...,N$:}

$\boldsymbol{\mathcal{X}}^i(t) \leftarrow \boldsymbol{\mathcal{G}}^i(t-1) \times_1 \mathbf{S}_1(t-1) \times_2 ... \times_d \mathbf{S}_d(t-1) $;

$\tilde{\boldsymbol{\mathcal{X}}}^i \leftarrow \boldsymbol{\mathcal{X}}^i(t) - \eta \nabla f(\boldsymbol{\mathcal{X}}^i(t); \mathit{D}^i)$;

$\hat{\boldsymbol{\mathcal{G}}}^i, \hat{\mathbf{S}}^i_1, ..., \hat{\mathbf{S}}^i_d \leftarrow \text{ST-HOSVD} (\tilde{\boldsymbol{\mathcal{X}}}^i)$;

Send $\hat{\boldsymbol{\mathcal{G}}}^i, \hat{\mathbf{S}}^i_1, ..., \hat{\mathbf{S}}^i_d$ to the Server;

\textbf{SERVER:}

$\mathbf{S}_1(t), ..., \mathbf{S}_d(t) \leftarrow$ Joint-Factorization($\hat{\boldsymbol{\mathcal{G}}}^i, \hat{\mathbf{S}}_1^i, ..., \hat{\mathbf{S}}_d^i$, $i\in[N]$);

Broadcasts $\mathbf{S}_k(t)$, $k \in [d]$ to all clients;

\For{$i = 1, ..., N$}{
    $\tilde{\boldsymbol{\mathcal{G}}}^i \leftarrow [\![\hat{\boldsymbol{\mathcal{G}}}^i; {\mathbf{S}_1(t)}^T\hat{\mathbf{S}}^i_1, ..., {\mathbf{S}_d(t)}^T\hat{\mathbf{S}}^i_d]\!]$;
    }
    
$\Sigma \leftarrow \sum_{i=1}^{N-1} c_i\tilde{\boldsymbol{\mathcal{G}}}^i$;

$\boldsymbol{\mathcal{G}}^i(t) \leftarrow \tilde{\boldsymbol{\mathcal{G}}}^i + \dfrac{c_i}{2}(\tilde{\boldsymbol{\mathcal{G}}}^N - \Sigma)$, $i\in[N-1]$;
    
$\boldsymbol{\mathcal{G}}^N(t) \leftarrow \dfrac{\tilde{\boldsymbol{\mathcal{G}}}^N + \Sigma}{2}$;

Sends $\boldsymbol{\mathcal{G}}^i(t)$ to client $i$;
}
\end{algorithm}
Let $\boldsymbol{\mathcal{G}}^i(t)$, $i \in [N]$ and $\mathbf{S}_k(t)$, $k \in [d]$ denote the cores and factor matrices obtained at the end of epoch $t$. In lines 4-6 of Algorithm \ref{alg:full}, a client $i$ constructs the full tensor $\boldsymbol{\mathcal{X}}^i(t)$ from the core $\boldsymbol{\mathcal{G}}^i(t-1)$ and factor matrices $\mathbf{S}_k(t-1)$, $k\in [d]$ received from the server. The client solves the local low-rank optimization problem by updating its respective tensor $\boldsymbol{\mathcal{X}}^i(t)$ using gradient descent to obtain $\tilde{\boldsymbol{\mathcal{X}}}^i$ and applying ST-HOSVD to this new tensor. The client then sends the factorized components $\hat{\boldsymbol{\mathcal{G}}}^i, \hat{\mathbf{S}}^i_1, ..., \hat{\mathbf{S}}^i_d$ to the server. 

To enforce the first two constraints in~\eqref{eq:fedTucker}, the server performs joint factorization. One option is to directly apply Lemma \ref{lemma:gao} from \cite{gao2021federated} so that the server can perform joint factorization
without full-size tensors (as summarized in Algorithm~\ref{alg:server_hosvd}). 
\begin{algorithm}[h]
\caption{Joint factorization on the server at epoch $t$ (Direct application of \cite{gao2021federated})}\label{alg:server_hosvd}
\setcounter{AlgoLine}{0}
\SetAlgoLined
\KwIn{Tucker rank $(r_1, ..., r_d)$}
\KwIn{$\hat{\boldsymbol{\mathcal{G}}}^i, \hat{\mathbf{S}}_1^i, ..., \hat{\mathbf{S}}_d^i$, $i\in[N]$, $t$}
\KwOut{$\mathbf{S}_k(t)$, $k\in[d]$}
\For{$k = 1, ..., d$}{
    $\mathbf{Y}_k \leftarrow$  $\Big[\hat{\mathbf{S}}^1_k \hat{\boldsymbol{\mathcal{G}}}^1_{(k)} \quad ... \quad \hat{\mathbf{S}}^N_k \hat{\boldsymbol{\mathcal{G}}}^N_{(k)}\Big]$;
    
    $\mathbf{S}_k(t) \leftarrow r_k$ left singular vectors of $\mathbf{Y}_k$; 
}
\end{algorithm}
However, it involves the SVD calculation of large matrices $\mathbf{Y}_k$ of size $n_k \times N\Pi_{j \neq k}^d r_j$, which can be very expensive as the number of clients and dimensions grow. 

Therefore, to further improve efficiency, we consider QR decomposition-based randomized Tucker decomposition (as proposed in~\cite{zhou2014decomposition}) and adapt it to the federated setting as follows. 
\begin{lemma}\label{lemma:communication}
     Suppose that $\boldsymbol{\mathcal{X}}^i = [\![\boldsymbol{\mathcal{G}}^i; \mathbf{S}_1^i, ..., \mathbf{S}_d^i]\!]$ where ${\mathbf{S}_k^i}^T \mathbf{S}_k^i = \mathbf{I}_{r_k}$ $\forall i \in [N], \forall k \in [d]$. Let $\mathbf{W}_k = \big[\boldsymbol{\mathcal{X}}^1_{(k)} \hspace{5pt} ... \hspace{5pt} \boldsymbol{\mathcal{X}}^N_{(k)} \big]$. If $\mathbf{Z}_k=\mathbf{W}_{k}\mathbf{\Omega}_k$ for a Gaussian matrix $\mathbf{\Omega}_k$ of size $N\Pi_{j \neq k}^d n_j \times r_k$, then $\mathbf{Z}_k= \sum_{i=1}^N \mathbf{S}_k^i\boldsymbol{\mathcal{G}}^i_{(k)}\mathbf{\Omega}_{k}^i$ for some $\mathbf{\Omega}_{k}^i$ Gaussian matrices of size $\Pi_{j \neq k}^d r_j \times r_k$.    
\end{lemma}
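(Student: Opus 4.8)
The plan is to reduce the identity to the mode-$k$ unfolding formula for a Tucker tensor combined with the orthogonal invariance of the Gaussian distribution. First I would apply the TTM unfolding identity from Section~\ref{subsection:tensor}. Since $\boldsymbol{\mathcal{X}}^i = \boldsymbol{\mathcal{G}}^i \times_1 \mathbf{S}_1^i \times_2 \cdots \times_d \mathbf{S}_d^i$, its mode-$k$ unfolding factors as
\[
\boldsymbol{\mathcal{X}}^i_{(k)} = \mathbf{S}_k^i \, \boldsymbol{\mathcal{G}}^i_{(k)} \, (\mathbf{B}_k^i)^T, \qquad \mathbf{B}_k^i := \mathbf{S}_d^i \otimes \cdots \otimes \mathbf{S}_{k+1}^i \otimes \mathbf{S}_{k-1}^i \otimes \cdots \otimes \mathbf{S}_1^i,
\]
where $\mathbf{B}_k^i \in \mathbb{R}^{\Pi_{j\neq k} n_j \times \Pi_{j\neq k} r_j}$. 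This rewrites each client block of $\mathbf{W}_k$ in terms of its Tucker factors.

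Next I would exploit the block structure of $\mathbf{Z}_k = \mathbf{W}_k \mathbf{\Omega}_k$. Partitioning the Gaussian matrix $\mathbf{\Omega}_k$ conformally with the column blocks of $\mathbf{W}_k = \big[\boldsymbol{\mathcal{X}}^1_{(k)} \, \cdots \, \boldsymbol{\mathcal{X}}^N_{(k)}\big]$ into $N$ row blocks $\mathbf{\Omega}_k^{(i)} \in \mathbb{R}^{\Pi_{j\neq k} n_j \times r_k}$, block multiplication yields
\[
\mathbf{Z}_k = \sum_{i=1}^N \boldsymbol{\mathcal{X}}^i_{(k)} \mathbf{\Omega}_k^{(i)} = \sum_{i=1}^N \mathbf{S}_k^i \, \boldsymbol{\mathcal{G}}^i_{(k)} \, (\mathbf{B}_k^i)^T \mathbf{\Omega}_k^{(i)}.
\]
Setting $\mathbf{\Omega}_k^i := (\mathbf{B}_k^i)^T \mathbf{\Omega}_k^{(i)} \in \mathbb{R}^{\Pi_{j\neq k} r_j \times r_k}$ already reproduces the claimed identity $\mathbf{Z}_k = \sum_i \mathbf{S}_k^i \boldsymbol{\mathcal{G}}^i_{(k)} \mathbf{\Omega}_k^i$ with matrices of the stated size.

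The remaining and most delicate point is to verify that each $\mathbf{\Omega}_k^i$ is again Gaussian. Each block $\mathbf{\Omega}_k^{(i)}$ inherits i.i.d. standard normal entries as a sub-block of $\mathbf{\Omega}_k$, so it suffices to show that left-multiplication by $(\mathbf{B}_k^i)^T$ preserves this distribution. The key structural fact is that $\mathbf{B}_k^i$ has orthonormal columns: using $(\mathbf{A}\otimes \mathbf{C})^T(\mathbf{A}\otimes \mathbf{C}) = (\mathbf{A}^T\mathbf{A})\otimes(\mathbf{C}^T\mathbf{C})$ repeatedly together with the hypothesis ${\mathbf{S}_j^i}^T \mathbf{S}_j^i = \mathbf{I}_{r_j}$ for every $j$, one obtains $(\mathbf{B}_k^i)^T \mathbf{B}_k^i = \mathbf{I}_{\Pi_{j\neq k} r_j}$. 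Hence $(\mathbf{B}_k^i)^T$ has orthonormal rows, and by the orthogonal invariance of the standard Gaussian distribution, $(\mathbf{B}_k^i)^T \mathbf{\Omega}_k^{(i)}$ again has i.i.d. standard normal entries.

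I expect this last step to be the main obstacle: it is not pure matrix algebra but requires correctly combining the Kronecker orthonormality bookkeeping with the semi-orthogonal invariance of the Gaussian. In particular, because $(\mathbf{B}_k^i)^T$ is a wide matrix, the invariance must be applied in the form ``$\mathbf{A}\mathbf{A}^T = \mathbf{I}$ implies $\mathbf{A}\mathbf{G}$ is Gaussian,'' and one must be careful that this distributional statement, rather than a mere dimension match, is what licenses calling $\mathbf{\Omega}_k^i$ a Gaussian matrix.
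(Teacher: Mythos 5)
Your proposal is correct and follows essentially the same route as the paper's own proof: partition $\mathbf{\Omega}_k$ into $N$ conformal row blocks, apply the mode-$k$ unfolding identity to each $\boldsymbol{\mathcal{X}}^i_{(k)}$, absorb the transposed Kronecker factor into the Gaussian block, and invoke orthogonal invariance. If anything, your treatment of the final distributional step (checking that $(\mathbf{B}_k^i)^T$ has orthonormal rows and that this is what licenses calling $\mathbf{\Omega}_k^i$ Gaussian) is more explicit than the paper's, which dispatches it with a one-line appeal to ``the property of the Gaussian distribution.''
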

\begin{proof}
    We rewrite $\mathbf{\Omega}_k = \begin{bmatrix}
           \bar{\mathbf{\Omega}}_{k}^1 \\
           \vdots \\
           \bar{\mathbf{\Omega}}_{k}^N
         \end{bmatrix}$ where $\bar{\mathbf{\Omega}}_{k}^i \in \mathbb{R}^{\Pi_{j \neq k}^d n_j \times r_k}$. We have $\mathbf{W}_{k}\mathbf{\Omega}_k = \sum_{i=1}^N \mathbf{W}^i_{(k)}\bar{\mathbf{\Omega}}_{k}^i$. In addition, since $\boldsymbol{\mathcal{X}}^i = [\![\boldsymbol{\mathcal{G}}^i; \mathbf{S}_1^i, ..., \mathbf{S}_d^i]\!]$, the mode-$k$ unfolding is $\mathbf{W}^i_{(k)} = \mathbf{S}_k^i\boldsymbol{\mathcal{G}}^i_{(k)}{\mathbf{V}_k^i}^T$ where $\mathbf{V}_k^i = (\mathbf{S}^i_d \otimes ... \otimes \mathbf{S}^i_{k+1}\otimes \mathbf{S}^i_{k-1} \otimes ... \otimes \mathbf{S}^i_1)$. Because ${\mathbf{S}_k^i}^T \mathbf{S}_k^i = \mathbf{I}_{r_k}$, it is easy to show that $\mathbf{V}_k^i$ also has orthogonal columns. 

    We now have $\mathbf{W}_{k}\mathbf{\Omega}_k = \sum_{i=1}^N \mathbf{S}_k^i\boldsymbol{\mathcal{G}}^i_{(k)}{\mathbf{V}_k^i}^T\bar{\mathbf{\Omega}}_{k}^i$. Let $\mathbf{\Omega}_{k}^i = {\mathbf{V}_k^i}^T\bar{\mathbf{\Omega}}_{k}^i$. Using the property of the Gaussian distribution, it is readily observed that $\mathbf{\Omega}_{k}^i$ is also a Gaussian matrix with each element independently drawn from the normal distribution. 
\end{proof}

Finally, we propose Algorithm \ref{alg:server_rhosvd} that combines the result in Lemma \ref{lemma:communication} (shown in lines 2-3) and QR decomposition. Compared to Algorithm \ref{alg:server_hosvd}, we now decompose a smaller matrix of size $n_k \times r_k$, which is independent of the number of clients and dimensions. Note that with randomized factorization, we essentially sample from the column space of a matrix, so as $r_k$ decreases, the randomized sketch becomes a less accurate approximation of the original tensor \cite{halko2011finding}. This in turn can introduce more errors in subsequent QR decomposition. Therefore, if we choose small values for the Tucker ranks, the loss in accuracy due to randomization may outweigh the gain in computational efficiency. In this case, it is advisable to use Algorithm \ref{alg:server_hosvd} instead.     
We name our method Component Randomized Joint Factorization (i.e., CompRandJF) with the use of Algorithm \ref{alg:server_rhosvd} for joint factorization in Algorithm \ref{alg:full}. Alternatively, we term the method using Algorithm \ref{alg:server_hosvd} for joint factorization in Algorithm \ref{alg:full} as Component Joint Factorization (i.e., CompJF).  

\begin{algorithm}
\caption{Randomized joint factorization on the server at epoch $t$}\label{alg:server_rhosvd}
\setcounter{AlgoLine}{0}
\SetAlgoLined
\KwIn{Tucker rank $(r_1, ..., r_d)$}
\KwIn{$\hat{\boldsymbol{\mathcal{G}}}^i, \hat{\mathbf{S}}_1^i, ..., \hat{\mathbf{S}}_d^i$, $i\in[N]$, $t$}
\KwOut{$\mathbf{S}_k(t)$, $k\in[d]$}
\For{$k = 1, ..., d$}{
    $\mathbf{\Omega}_{k}^i \leftarrow$ $i$'th i.i.d. sample of a Gaussian matrix of size $\Pi_{j \neq k}^d r_j \times r_k$, $i\in[N]$;

    $\mathbf{Y}_k \leftarrow \sum_{i=1}^N \hat{\mathbf{S}}^i_k \hat{\boldsymbol{\mathcal{G}}}^i_{(k)} \mathbf{\Omega}_{k}^i$;
    
    $\mathbf{S}_k(t) \leftarrow \mathbf{Q}$ from QR decomposition of $\mathbf{Y}_k$;
}
\end{algorithm}
To satisfy the last constraint in \eqref{eq:fedTucker}, the server first recomputes the cores after obtaining the updated factor matrices (see lines 9–11 of Algorithm \ref{alg:full}), then applies the FIRM update \eqref{eq:firm_update}, adapted specifically for the cores.
\begin{remark}
    Heterogeneous ranks can arise when clients have varying bandwidths or have prior knowledge about the appropriate rank of the local data. Our joint factorization scheme for the server allows heterogeneous ranks, as it does not require dimensions to match and clients can choose their own Tucker rank $(r_1^i, ..., r_d^i)$. Essentially, Algorithms \ref{alg:server_hosvd} and \ref{alg:server_rhosvd} can be adjusted to find the $r_k^*$ leading left singular vectors where $r_k^* = \text{max}(r_k^1, ..., r_k^N)$, $k\in[d]$.
\end{remark}

\subsection{Complexity Analysis}\label{method:complexity}
In this section, we discuss the complexity regarding the communication of our proposed method. Without loss of generality, we assume that $n = n_1 = ... = n_d$ and $r = r_1 = ... = r_d$. For a tensor $\boldsymbol{\mathcal{X}} \in \mathbb{R}^{n_1 \times ... \times n_d}$, the communication complexity of transmitting the full-size tensor is $ O (n^d) $. On the other hand, communicating the rank-$(r, ..., r)$ Tucker decomposition reduces the complexity to $O(r^d + dnr)$. 

A key challenge in utilizing tensor decomposition for communication compression is the choice of rank $r$ that maximizes the compression ratio (defined as $\phi = \dfrac{n^d}{r^d + dnr}$) while maintaining performance. Dai et al. \cite{dai2023deep} give an upper bound for the Tucker rank that guarantees a reduction in communication cost per epoch, specifically $r \leq \dfrac{n}{(1 + dn)^{1/d}}$. While this bound provides practical guidance, it is overly restrictive in the range of $r$. For example, when $n=250$ and $d=2$, the bound suggests $r \leq 11$. Such aggressive compression, and consequently regularization on the underlying structure of the solution, can have a negative impact on the reconstruction quality. 

To address this limitation, it is important to find a broader range of values for $r$. Proposition \ref{communication} provides less restrictive bounds for $d=2$ and $d=3$ that allow clients to choose a much larger Tucker rank, compared to \cite{dai2023deep}. For the same example, the acceptable range of $r$ increases to $r \leq 103$. Proposition \ref{communication} further shows that as $d$ and $n$ increase, the communication overhead savings with Tucker decomposition increase substantially. 

\begin{proposition}\label{communication}
If $n\geq 3$, then the improved upper bound reads
\begin{itemize}
    \item $r < \dfrac{n}{\sqrt{2}+1}$ for $d=2$, and
    \item $r < n\big(\dfrac{n-3}{n}\big)^{1/3}$ for $d=3$.
\end{itemize}
Furthermore, for both values of $d$, these bounds are greater than $\dfrac{n}{(1+dn)^{1/d}}$.
\end{proposition}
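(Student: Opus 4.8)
The plan is to begin by identifying exactly what inequality the ``upper bound'' on $r$ is meant to certify. Communicating the rank-$(r,\dots,r)$ Tucker decomposition is cheaper than the full tensor precisely when the compression ratio $\phi = \frac{n^d}{r^d + dnr}$ exceeds $1$, i.e.\ when
\begin{equation*}
r^d + dnr < n^d.
\end{equation*}
Thus the proposition reduces to solving (or suitably bounding the solution set of) this polynomial inequality for $d=2,3$, and then comparing the resulting threshold against $\frac{n}{(1+dn)^{1/d}}$.

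For $d=2$ I would solve $r^2 + 2nr - n^2 < 0$ directly. This quadratic in $r$ has positive root $n(\sqrt{2}-1)$, and since $\sqrt{2}-1 = \frac{1}{\sqrt{2}+1}$, the admissible range is exactly $r < \frac{n}{\sqrt{2}+1}$. So for $d=2$ the stated bound is not merely sufficient but sharp, and no slack is lost.

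For $d=3$ the cubic $r^3 + 3nr < n^3$ has no convenient closed-form root, so rather than solve it exactly I would extract a clean sufficient condition by bounding the two terms on the left separately. The proposed bound already forces $r < n$ (because $\frac{n-3}{n} < 1$), which gives $3nr < 3n^2$; it then suffices to impose $r^3 < n^3 - 3n^2 = n^2(n-3)$, since summing the two estimates yields $r^3 + 3nr < n^3$. Taking cube roots recovers $r < n\left(\frac{n-3}{n}\right)^{1/3}$. I would stress that this step trades sharpness for a simple expression: the genuine cubic root is larger, so the bound is only sufficient, which is all that is needed to beat Dai et al.

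The final task is the comparison with $\frac{n}{(1+dn)^{1/d}}$. For $d=2$, cancelling $n$ and inverting reduces the claim to $(1+2n)^{1/2} > \sqrt{2}+1$, i.e.\ $n > 1+\sqrt{2}$, which holds for $n\geq 3$. For $d=3$, cubing and cross-multiplying reduces it to $(n-3)(1+3n) > n$, i.e.\ $n^2 - 3n - 1 > 0$. This is where I expect the main difficulty: the quadratic $n^2-3n-1$ is positive only beyond its root $\frac{3+\sqrt{13}}{2}\approx 3.30$, so the comparison is secured for $n\geq 4$ but fails at $n=3$, where the $d=3$ bound itself degenerates to $0$. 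I would therefore establish the $d=3$ comparison for $n\geq 4$ and flag the $n=3$ corner explicitly, rather than let the blanket hypothesis $n\geq 3$ obscure it.
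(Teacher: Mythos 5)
Your proof is correct and follows essentially the same route as the paper's: both reduce the claim to $r^d + dnr < n^d$, solve the quadratic exactly for $d=2$, and for $d=3$ split the cubic into $r^3 < n^2(n-3)$ and $3nr < 3n^2$ (the paper performs the identical split after normalizing with $\beta = r/n$). Your flag at the $n=3$ corner is warranted rather than pedantic: there the $d=3$ bound degenerates to $0$, so the final comparison with $n/(1+3n)^{1/3}$ genuinely fails (the paper's own step $1+3n > 1 + \tfrac{3}{n-3}$ divides by zero at $n=3$), and the last assertion of the proposition for $d=3$ should really be stated for $n \geq 4$.
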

\begin{proof}
Let $\beta = \dfrac{r}{n}$, resulting in $\phi = \dfrac{1}{\beta^d + (d\beta)/n^{d-2}}$.
Note that $0 < \beta < 1$ since $r < n$.

For $d=2$, $\phi = \dfrac{1}{\beta^2 + 2\beta}$. If $\beta < 1/(\sqrt{2} + 1)$, then $\beta^2 + 2\beta < 1$ and $\phi > 1$. Since $(1 + dn)^{1/d} \geq \sqrt{5} > \sqrt{2} + 1$, the bound $\dfrac{n}{\sqrt{2}+1} > \dfrac{n}{(1+2n)^{1/2}}$.

For $d=3$, $\phi = \dfrac{1}{\beta^3 + 3\beta/n}$. If $\beta < \big(\dfrac{n-3}{n}\big)^{1/3}$, we have $\beta^3 + 3\dfrac{\beta}{n} < \dfrac{n-3}{n} + \dfrac{3}{n} = 1$ and $\phi > 1$. Since $1 + 3n > 1 + \dfrac{3}{n-3} = \dfrac{n}{n - 3}$, the bound $n\big(\dfrac{n-3}{n}\big)^{1/3} > \dfrac{n}{(1+3n)^{1/3}}$.    
\end{proof}

We also compare the computational complexity of CompJF and CompRandJF with the baseline approach, where the server reconstructs full-size tensors to aggregate before applying tensor decomposition (termed as FullDecomp). We summarize the operation counts in Table \ref{table:complexity}.

\renewcommand{\arraystretch}{1.25}
\begin{table}[h] 
\caption{Computational complexity of the server per epoch}
\label{table:complexity}
\centering
\begin{tabular}{|c | c |}
\hline
\hline
Approach & Complexity \\
\hline
FullDecomp & $\makecell{O\big(N\sum_{k=1}^d \big( n^{d-k+1}r^k + \\ n^{d-k+2} r^{k-1} + n^{d-k} r^k) + Nn^d\big) 
}$  \\
\hline
CompJF (Alg. \ref{alg:full} \& \ref{alg:server_hosvd}) & $\makecell{O\big(Nd\big(nr^d + n^2r^{d-1}\big) \\+ N\big(dnr^2 + dr^{d+1}\big) + Nr^d\big)}$ \\
\hline
CompRandJF (Alg. \ref{alg:full} \& \ref{alg:server_rhosvd}) & \makecell{$O\big(Nd\big(r^d + r^{d+1} + nr^2 + nr\big)+ dnr^2$ \\$+ N\big(dnr^2 + dr^{d+1}\big) + Nr^d\big)$} \\
\hline
\hline
\end{tabular}
\end{table}

\section{Experiments and Results}
We evaluate the performance of CompJF and CompRandJF using synthetic data. To show the benefits of our aggregation schemes, we benchmark them against FIRM and FullDecomp, where the server reconstructs full-size tensors to aggregate before applying tensor decomposition for downstream communication. We also experiment with the naive extension of FedAVG in which factor matrices are averaged across clients but this aggregation scheme does not yield meaningful training. 

In addition, we demonstrate the communication efficiency achieved by CompJF and CompRandJF. For benchmarking, we apply to FIRM the compressed sparse row (CSR) encoding and Top-$k$ sparsification. Top-$k$ is a magnitude-based sparsification method and has been shown to effectively reduce communication overhead \cite{sattler2019robust, malekijoo2021fedzip}. For Top-$k$ sparsification, we first sort all the values in an array and only keep the the top $k\%$ of the values; the remaining values are set to zero. Note that Top-$k$ sparsification produces sparse arrays and can further benefit from sparse encoding. Thus, the two benchmarks for communication efficiency are FIRM with only CSR encoding and FIRM with a combination of Top-$k$ and CSR encoding. We refer to the second benchmark as Top-$k$ for short.

Finally, we highlight the ability of our methods to facilitate heterogeneous ranks in which clients determine their own Tucker decomposition ranks.

\subsection{Experimental settings}
Tomographic data, or sinograms, are generated from scanning an object using an energy source with $\theta$ angles and $\tau$ discretized beamlets for each angle. Let the tensor $\boldsymbol{\mathcal{A}} \in \mathbb{R}^{\theta \times \tau \times n_1 \times n_2}$ represent the discrete Radon transform and $\boldsymbol{\mathcal{A}}_{\theta, \tau, i, j}$ is the intersection length of the angle-beamlet pair $(\alpha, \beta)$ with the pixel $(i, j)$. Let the tensor to be reconstructed and the observed measurements of each client $i$ be denoted by $\boldsymbol{\mathcal{X}}^i \in \mathbb{R}^{n_1 \times ... \times n_d}$ and $\boldsymbol{\mathcal{B}}^i \in \mathbb{R}^{\theta \times \tau \times n_3 ... \times n_d}$. The loss function in our tensor estimation problem becomes $f =  \sum_{i=1}^N \big|\big|\boldsymbol{\mathcal{A}} \boxtimes \boldsymbol{\mathcal{X}}^i - \boldsymbol{\mathcal{B}}^i \big|\big|^2_F$ where $N$ is the total number of local clients and $\boxtimes$ denotes the \textit{contracted tensor product} of two tensors.

We evaluate the proposed method on synthetic data, which are generated with parallel geometry and are variants of the Shepp-Logan phantom, a standard image for testing in computed tomography \cite{gach20082d}. Following the setting in \cite{byeon2025firmfederatedimagereconstruction}, we experiment with $N=4$ clients, one holding data obtained from the $XRT$ modality and the other with data produced by the $XRF$ modality. The ground truth $\boldsymbol{\mathcal{X}}^i$ to be reconstructed is of size $250 \times 250$, and the Radon transform has $\tau=354$ beamlets and $\theta=100$ angles. Note that with this number of angles, the measurements are undersampled and the inverse problem is ill-posed.

\begin{figure*}
\centering
  \includegraphics[width=0.9\linewidth]{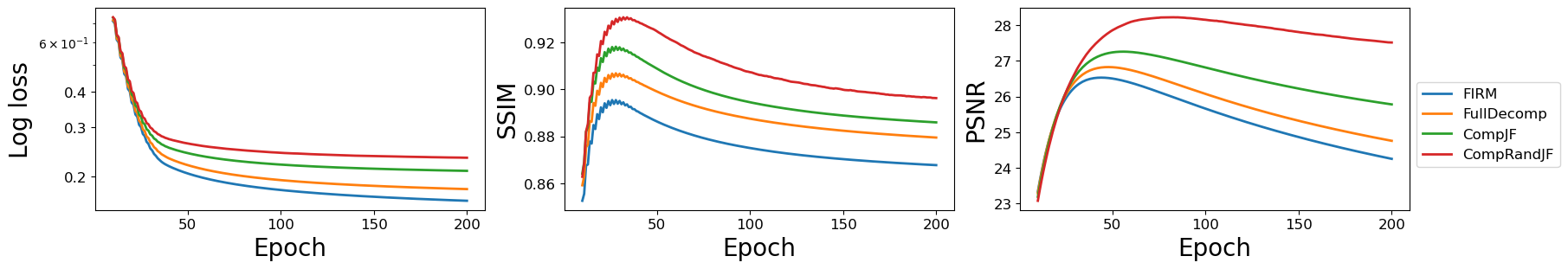}
  \caption{\footnotesize{Evaluating the performance of various methods given the strongest noise level ($\sigma=0.1$). For CompJF, CompRandJF and FullDecomp, we present the results for Tucker rank $r=100$, the largest value that guarantee communication compression. The results for other ranks $r \geq 40$ is similar.}}
\label{figure:noisy_data}
\end{figure*}

We experiment with adding speckle noise, which is inherent in the imaging process, to the sinograms, such that $\textit{output} = \textit{true signal} + \epsilon*\textit{true signal}$ and $\epsilon$ is normally distributed. We vary the standard deviation of the added noise as $\sigma \in \{0.01, 0.05, 0.1\}$.

For CompJF, CompRandJF and FullDecomp, we use Proposition \ref{communication} to obtain the upper bound of the decomposition rank $r$ given that $n=250$. The results are obtained with $r \in \{10, 20, 40, 60, 80, 100\}$. In the heterogeneous rank setting, each client randomly samples a value of $r$ in the range of $[20, 100]$. There are two possible scenarios of heterogeneous ranks. Clients a) have prior knowledge of the appropriate rank or fixed communication limit, or b) have unstable communication bandwidths, so the Tucker ranks need adjusting each communication round. Consequently, in our experiments, the Tucker ranks can be a) sampled before the federated reconstruction starts or b) resampled every epoch prior to upstream communication.
 
\subsection{Performance metrics}
\subsubsection{Reconstruction quality} We compare the solutions to the ground-truth images by evaluating two metrics as the Peak Signal-to-Noise Ratio (PSNR) and the multiscale Structural Similarity Index Measure (SSIM). PSNR is inversely proportional to the mean squared error (MSE) and SSIM computes similarities in structural information and incorporates interdependencies among pixels. SSIM is bounded between $-1$ and $1$ and when $SSIM=0$, there is no similarity between two images. For both quantities, higher values indicate better reconstruction.

\subsubsection{Communication efficiency} 
We are interested in measuring the trade-off between reconstruction quality and communication cost. Song et al. \cite{song2023federated} propose Gamma Communication Efficiency $    GCE = \dfrac{\text{Accuracy}}{(1 - \text{Accuracy})^\gamma \sum_{t=1}^T log_2(V_t + 1)}
$ in which $T$ denotes the number of communication rounds and $V_t$ is the communication volume, measured in bits. The parameter $\gamma$ controls the importance of test accuracy in relation to communication, and the two quantities are near proportional if $\gamma \rightarrow 0$. A higher value of GCE indicates a better balance between communication and accuracy, which is more desirable. We consider image quality and communication compression to be of equal importance and use $\gamma=0.01$, as in \cite{song2023federated}, to analyze our results. We also adapt the metrics to images by replacing the test accuracy with SSIM as the SSIM values in our experiments fall in the range $[0, 1]$. To calculate $V_t$, we combine upstream and downstream communication. 

\subsection{Results}\label{subsection:results}

\subsubsection{\textbf{Robustness against noisy data}} We demonstrate in Fig. \ref{figure:noisy_data} that CompJF and CompRandJF produce better reconstruction than other approaches in the presence of noise. To understand the effect of noise, we plot the results for the strongest noise level ($\sigma=0.1$) and use the largest Tucker rank that guarantee communication compression ($r=100$). 

Overfitting to noise, combined with the ill-posed nature of the inverse problem, often leads to a degradation of image quality as the optimization progresses. This is evident in Fig. \ref{figure:noisy_data} as both PSNR and SSIM peak before 100 epochs and then decrease over time, while the loss remains non-increasing. However, the low-rank decomposition coupled with joint factorization (as enforced in CompJF and CompRandJF) can mitigate this effect, and result in a more graceful degradation. This behavior aligns with results from previous works in which low-rank regularization helps constrain the solution space of the inverse problem, suppressing noise and artifacts in the subsequent reconstructed images. This also explains why the other three methods lead to better quality metrics compared to FIRM despite having worse loss function.

We also show the ground truth and the reconstructed images from various methods in Fig. \ref{appendix_fig:recon_img}. The quality of the images aligns with the numerical results in Fig. \ref{figure:noisy_data}. It is visible that CompJF and CompRandJF have additional denoising effect on the reconstructed images compared to FIRM.
\begin{figure}[htb]
\centering
  \includegraphics[width=0.9\linewidth]{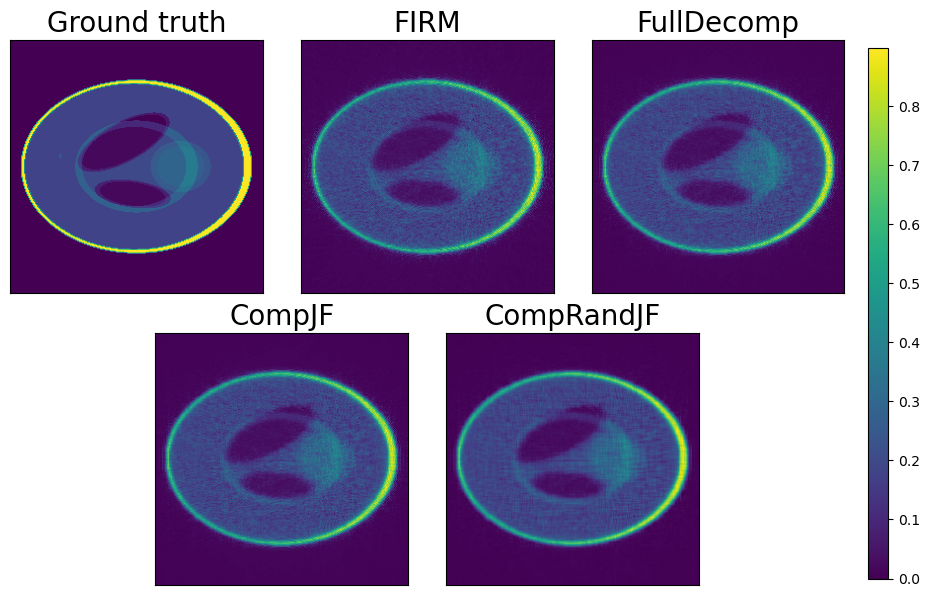}
  \caption{\footnotesize{Ground-truth and reconstructed images from various methods for client $N$ with XRT imaging modality. The images are reconstructed from noisy data ($\sigma=0.1$). For FullDecomp, CompJF, and CompRandJF, we use Tucker rank $r=100$. The reconstructed images correspond to the highest SSIM values.}}
\label{appendix_fig:recon_img}
\end{figure}
\subsubsection{\textbf{Stable performance across Tucker ranks}} In addition to better reconstruction, CompJF and CompRandJF are able to maintain good performance as we decrease the Tucker rank and further compress the data. We present the best result for each rank value as well as the result induced by early stopping in Fig. \ref{figure:hyperparameters}. A common solution to overfitting in tomographic reconstruction is to apply the discrepancy principle, which is an early stopping rule and is widely used in the literature. We use the formulation of the discrepancy principle as in \cite{byeon2025firmfederatedimagereconstruction} so that the optimization stops when the condition $
    f^i(t) \leq \max{\big(\boldsymbol{\mathcal{B}}^i\big)}\sqrt{\theta\tau} \sigma
$ is satisfied for every client, where $\boldsymbol{\mathcal{B}}^i$ are the observed measurements, $\theta$ is the number of projection angles, $\tau$ is the number of beamlets and $\sigma$ is the standard deviation of added noise.

We observe that our proposed approaches produce better results than FIRM for Tucker ranks $r \geq 40$, especially as the data become more noisy. This performance holds when early stopping is incorporated. For $r < 40$, CompJF exhibits a more gradual decrease in performance compared to CompRandJF. This is due to the randomized sketch used in CompRandJF. As $r$ further decreases, this sketch is no longer a good approximation of the original matrix and leads to loss of information in the factorization process.

\begin{figure}[h]
\centering
\begin{subfigure}{1\linewidth}
    \includegraphics[width=\linewidth]{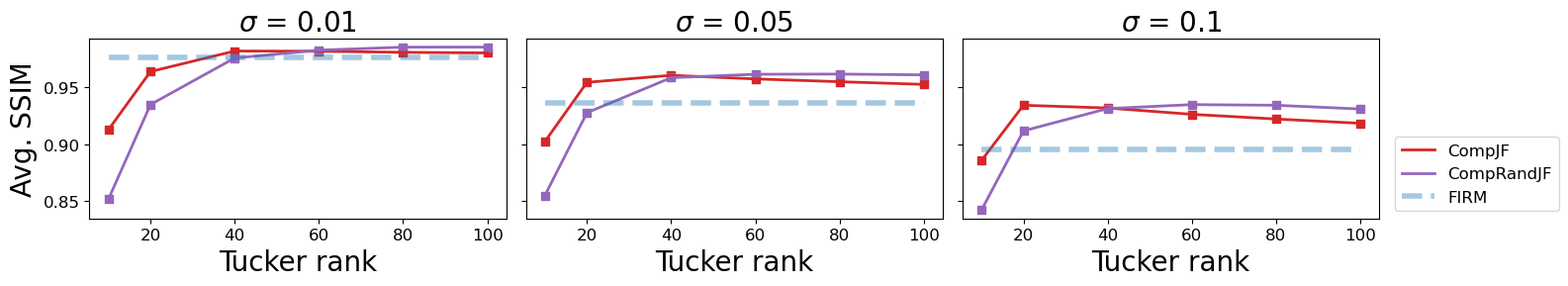}
    \caption{\footnotesize{Best result overall}}
\end{subfigure}
\\
\begin{subfigure}{1\linewidth}    \includegraphics[width=\linewidth]{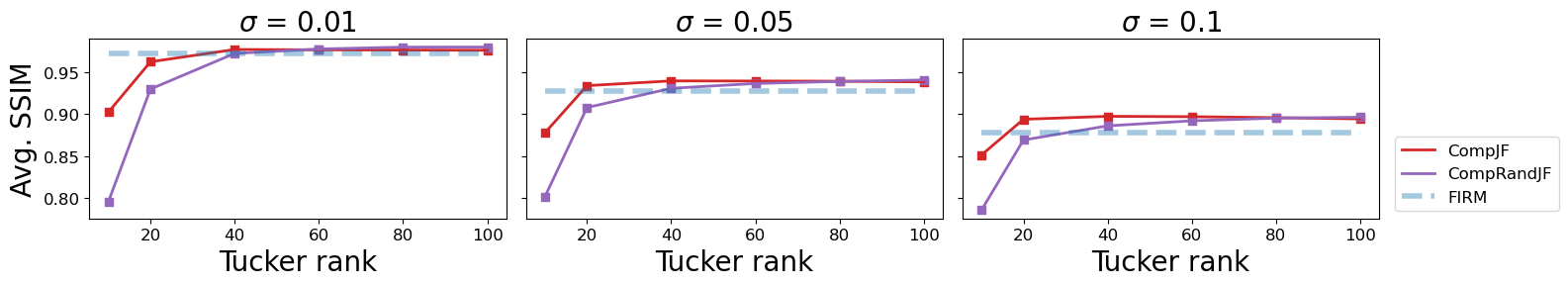}
    \caption{\footnotesize{Result with early stop}}
\end{subfigure}
  \caption{\footnotesize{Performance of CompJF and CompRandJF for varying Tucker decomposition ranks and varying level of noise. (a) Includes the highest SSIM value, averaged across clients, obtained by the approaches. (b) Includes the average SSIM obtained when the discrepancy principle is satisfied. With early stopping condition, all three approaches terminate at the same epoch.}}
\label{figure:hyperparameters}
\end{figure}

\subsubsection{\textbf{Communication compression}} We next illustrate the superior ability to balance reconstruction quality and communication compression of our methods, compared to Top-$k$ sparsification and CSR encoding. With Top-$k$ sparsification, the parameter $k$ controls the percentage of compression in each communication round, and we vary $k \in \{10, 30, 50, 70, 90\}$. For easier comparison, we convert the chosen Tucker ranks to the corresponding percentage of compression. For example, given that $n=250$, if $r=40$, we have a $34.56\%$ compression. 
\begin{figure}[h]
\centering
\begin{subfigure}{1\linewidth}
    \includegraphics[width=\linewidth]{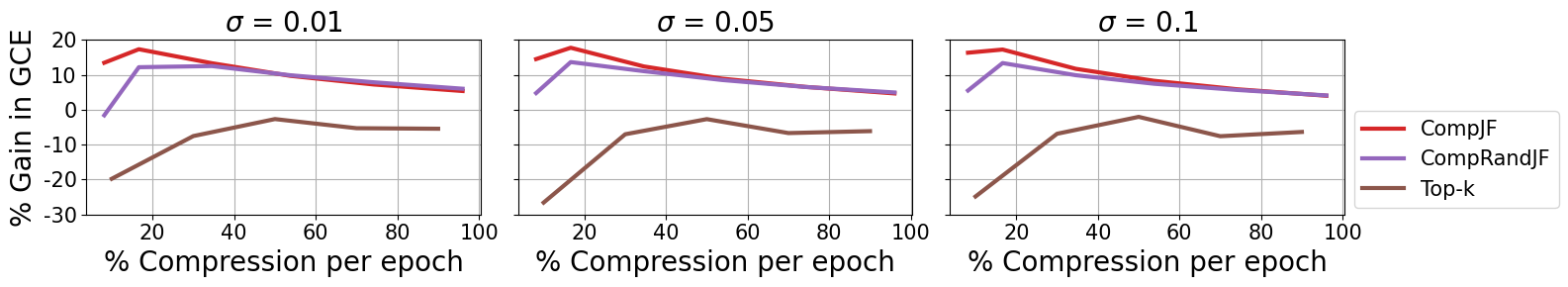}
    \caption{\footnotesize{\% Gain in GCE by CompJF, CompRandJF and FIRM with Top-$k$ over FIRM with CSR encoding.}}
\end{subfigure}
\\
\begin{subfigure}{1\linewidth}
    \includegraphics[width=\linewidth]{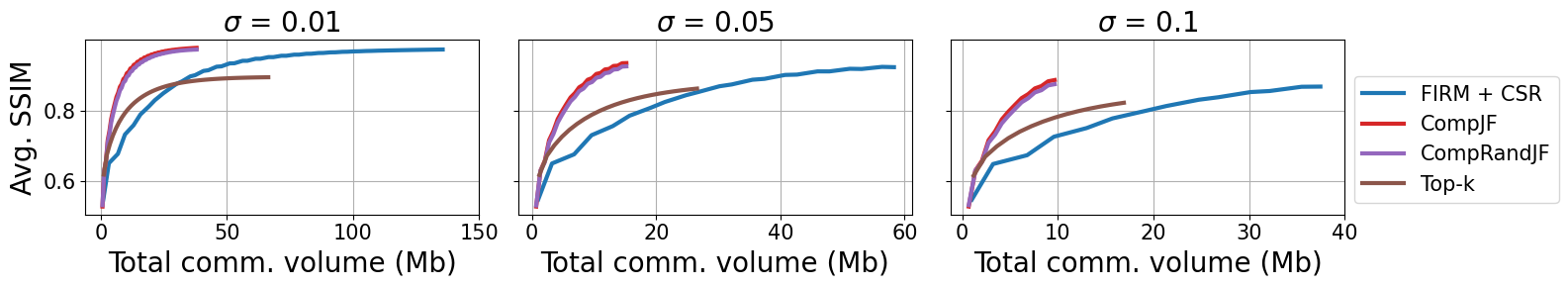}
    \caption{\footnotesize{Plot of average SSIM and the total communication volume in Mb to achieve such SSIM.}}
\end{subfigure}
  \caption{\footnotesize{Performance of our methods, CompJF and CompRandJF, Top-k and CSR encoding in balancing reconstruction quality with communication compression. a) GCE is computed using the SSIM values when the early stopping is satisfied and is averaged across clients. b) For CompJF and CompRandJF, the Tucker rank is $r=40$ and for Top-$k$, we use $k=30$. With this choice of hyperparameters, our methods and Top-$k$ have close percentage of compression in each communication round ($30\%$ and $34\%$)}.}
\label{figure:compression}
\end{figure}

Fig. \ref{figure:compression}a shows the percentage gain in GCE with respect to the CSR encoding and \ref{figure:compression}b highlights the total communication volume needed to achieve high quality reconstruction. The results in Fig. \ref{figure:compression} indicate that CompJF and CompRandJF lead to a substantial gain in GCE compared to full-size communication enhanced with CSR encoding and surpass Top-$k$. When $r=10$, the drop in SSIM overshadows the improvement in communication, thus the sharp decline in GCE.

\subsubsection{\textbf{Heterogeneous ranks}} Finally, we demonstrate the ability of CompRandJF, as well as CompJF, to adapt to heterogeneous ranks. This property is an improvement over other aggregation schemes that rely purely on averaging and require matching dimensions. In Fig. \ref{figure:heterocomm}, we use FullDecomp, adjusted for heterogeneous ranks, and FIRM as baselines for comparison. We observe that the performance of Comp JF and CompRandJF is consistent with the homogeneous case regarding reconstruction quality. 
\begin{figure}[h]
\centering
\begin{subfigure}{0.45\linewidth}
    \includegraphics[width=\linewidth]{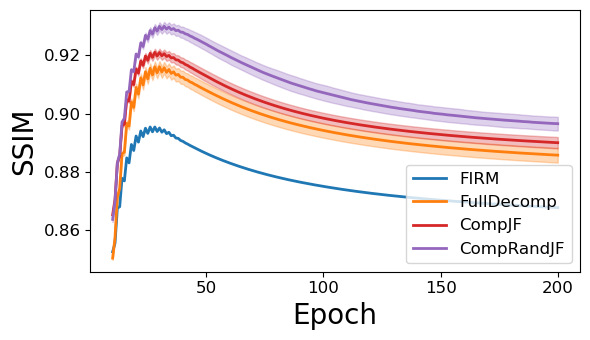}
    \caption{\footnotesize{Clients sample personalized ranks at the start.}}
\end{subfigure}
\begin{subfigure}{0.45\linewidth}
    \includegraphics[width=\linewidth]{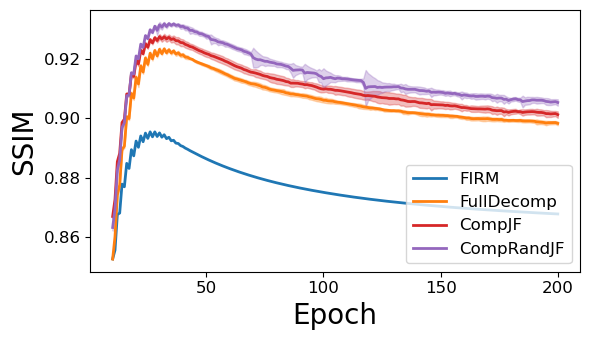}
    \caption{\footnotesize{Clients sample ranks before every communication round.}}
\end{subfigure}
\caption{\footnotesize{Performance of FullDecomp, CompJF and CompRandJF in heterogeneous rank setting for noisy data ($\sigma=0.1$). We conduct 10 independent simulations and plot the mean with one standard deviation.}}
\label{figure:heterocomm}
\end{figure}
\section{Conclusion}
In this paper, we present a federated image reconstruction method that leverages Tucker decomposition in conjunction with joint factorization and randomized sketching for efficient aggregation. We compare our approach against two benchmarks: 1) FIRM which addresses federated reconstruction without low-rank regularization and transmits high-dimensional data for each update, and 2) FullDecomp which is a common aggregation technique in tensor decomposition for federated learning that reconstructs the full weights for aggregation and optionally re-decomposes them to reduce downstream communication. Our method demonstrates superior reconstruction quality and more graceful performance degradation under noisy and undersampled conditions. In addition, we achieve better compression than magnitude-based sparsification (Top-\(k\)) and compressed sparse row encoding. By supporting both homogeneous and heterogeneous ranks, our approach delivers high-quality reconstruction across diverse scenarios. For future work, we plan to conduct convergence analyses, explore higher-dimensional datasets, and further investigate heterogeneous ranks—particularly relevant given that true data ranks are often unknown in practice, making randomized rank selection a valuable research avenue.
\bibliographystyle{plain}
\bibliography{reference}
\appendices
\section{Results for CompAVG}\label{appendix:moreresults}
We present the results of CompAVG, the naive extension of FedAVG in which factor matrices are averaged across clients. 
\begin{figure}[ht]
\centering
  \includegraphics[width=\linewidth]{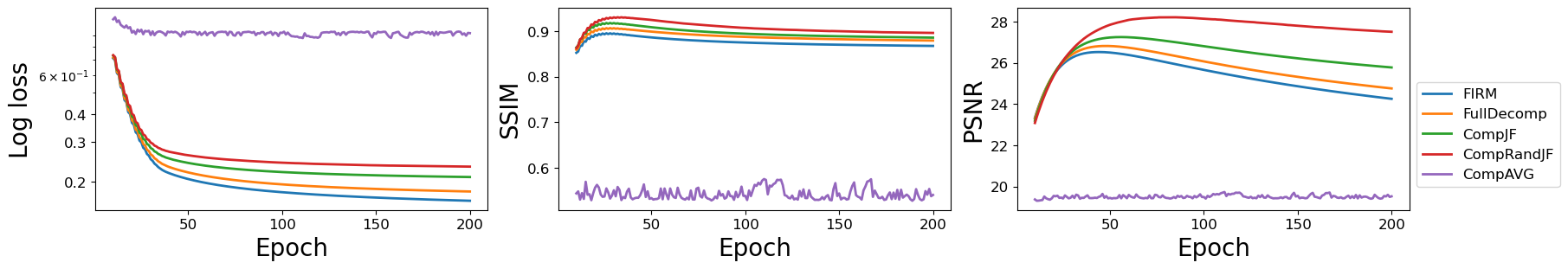}
  \caption{\footnotesize{Evaluating the performance of various methods given noisy data ($\sigma=0.1$). For CompJF, CompRandJF and FullDecomp, Tucker rank is $r=100$.}}
\label{appendix_fig:noisy_data_avg}
\end{figure}
Evidently in Fig. \ref{appendix_fig:noisy_data_avg}, CompAVG does not yield meaningful training compared to other methods. This is likely because the factor matrices are obtained via tensor decomposition after the full-size tensor update by gradient descent, unlike the weights in FedAVG which are directly optimized.
\end{document}